\Crefname{ALC@unique}{Line}{Lines}
\crefname{hypothesis}{Hypothesis}{Hypotheses}
\colorlet{texcscolor}{blue!50!black}
\colorlet{texemcolor}{red!70!black}
\colorlet{texpreamble}{red!70!black}
\colorlet{codebackground}{black!25!white!25}
\lstdefinestyle{siamlatex}{%
  style=tcblatex,
  texcsstyle=*\color{texcscolor},
  texcsstyle=[2]\color{texemcolor},
  keywordstyle=[2]\color{texemcolor},
  moretexcs={cref,Cref,maketitle,mathcal,text,headers,email,url},
}
\DeclareTotalTCBox{\code}{ v O{} }
{fontupper=\ttfamily\color{black},
  nobeforeafter,
  tcbox raise base,
  colback=codebackground,colframe=white,
  top=0pt,bottom=0pt,left=0mm,right=0mm,
  leftrule=0pt,rightrule=0pt,toprule=0mm,bottomrule=0mm,
  boxsep=0.5mm,
  #2}{#1}
\patchcmd\newpage{\vfil}{}{}{}
\def\E{\mathbb{E}}
\def\Dist{\mathcal{D}}
\def\Hyp{\mathcal{H}}
\def\Man{\mathcal{M}}
\def\R{\mathbb{R}}
\DeclareMathOperator*{\argmin}{arg\,min}
\title{Continuous Symmetry Discovery and Enforcement Using Infinitesimal Generators of Multi-parameter Group Actions\thanks{This work is an extension of \textit{Symmetry Discovery Beyond Affine Transformations} \cite{Shaw1}.
\funding{This research was supported in part by the NSF under Grants 2212325 [K.M.], CIF-2212327 [A.M.],
and CIF-2338855 [A.M.].}}}
\author{Ben Shaw\thanks{Utah State University (\email{ben.shaw@usu.edu}).}
\and Sasidhar Kunapuli\thanks{UC Berkeley (\email{sasidhar.kunapuli@gmail.com}).}
\and Abram Magner\thanks{University at Albany (\email{amagner@albany.edu}).}
\and Kevin R. Moon\thanks{Utah State University (\email{kevin.moon@usu.edu}).}}
\begin{document}
\maketitle

\begin{tcbverbatimwrite}{tmp_\jobname_abstract.tex}
\begin{abstract}

Symmetry-informed machine learning can exhibit advantages over machine learning which fails to account for symmetry. In the context of continuous symmetry detection, current state of the art experiments are largely limited to detecting affine transformations. Herein, we outline a computationally efficient framework for discovering infinitesimal generators of multi-parameter group actions which are not generally affine transformations. This framework accommodates the automatic discovery of the number of linearly independent infinitesimal generators. We build upon recent work in continuous symmetry discovery by extending to neural networks and by restricting the symmetry search space to infinitesimal isometries. We also introduce symmetry enforcement of smooth models using vector field regularization, thereby improving model generalization. The notion of vector field similarity is also generalized for non-Euclidean Riemannian metric tensors.


\end{abstract}

\begin{keywords}
Symmetry Discovery, Machine Learning, Symmetry Enforcement, Killing Vectors, Lie Derivative
\end{keywords}

\begin{MSCcodes}
62-07, 68T01
\end{MSCcodes}
\end{tcbverbatimwrite}
\input{tmp_\jobname_abstract.tex}

\section{Introduction}
\label{sec:intro}


Approaching various machine learning tasks with prior knowledge, commonly in the form of symmetry present in datasets and tasks, has been shown to improve performance and/or computational efficiency \cite{Lyle1,Bergman1,Craven1,tahmasebi2023exact}. While research in symmetry detection in data has enjoyed recent success \cite{LieGAN}, the current state of the art leaves room for improvement, particularly with respect to the detection of continuous symmetry \cite{Zhou1}. In this work, we seek to address two issues at hand: primarily, the inability of current methods to detect continuous symmetry beyond affine symmetries; secondly, the apparent increase of difficulty in identifying continuous symmetries when compared with discrete symmetries.

Our methodology of symmetry discovery is summarized as follows. First, given a dataset, certain quantities of interest, hereafter referred to as \textit{machine learning functions}, are identified, such as an underlying probability distribution which generates the dataset, a classification or regression function, or a dimensionality reduction function. Second, we estimate tangent vector fields which annihilate the machine learning functions. The estimated vector fields themselves are generators of continuous symmetries. Thus, estimating the vector fields provides an indirect estimate of the continuous symmetries. 


Vector fields have been used to describe symmetry in the context of machine learning before \cite{Lconv}. However, our use of vector fields differs from previous approaches for two reasons. First, we fully exploit the vector fields as objects which operate on machine learning functions directly. Second, we estimate vector fields associated with continuous symmetries beyond affine transformations.

We also extend the previous work on this subject in nontrivial ways \cite{Shaw1}. While the previous work introduced the notion of symmetry enforcement via the estimation of functions which are invariant with respect to a fixed collection of vector fields, we explore the method of regularization for symmetry enforcement. This method is critical for tasks for which ``strict" invariance is not appropriate, or for tasks for which a suitable collection of invariant functions cannot be identified or estimated. A more mathematically rigorous treatment of the estimation of invariant functions is also provided. We also provide a brief overview of infinitesimal generators for multi-parameter group actions in order to highlight that our methodology is suited for the multi-parameter case. We provide more detail and experiments regarding the restriction to special types of vector fields such as Killing vectors: this restriction, as noted in the previous work, can have a particularly positive impact on the estimation of vector fields whose flows are not affine transformations. We generalize the notion of comparing discovered/estimated vector fields to ground-truth vector fields and propose a means to obtain ``symmetry scores" to assess the extent to which models are invariant with respect to a vector field. We provide a discussion on the robustness of our symmetry estimation method, and we present all-new experimental results.

This paper is organized as follows. In the next section, we briefly discuss previous work in symmetry detection and enforcement. In Section \ref{sec:background}, we discuss vector fields, flows, infinitesimal generators of multi-parameter group actions, and isometries, the mathematical principles that are foundational to our methods. In Section \ref{sec:methods}, we describe our methods, and in Section \ref{sec:experiments}, we present our experimental results. We then offer concluding remarks and outline future work in the area.

\section{Related Work}
\label{sec:liturature}

\subsection{Symmetry Discovery}


Early work on symmetry detection in machine learning focused primarily on detecting symmetry in image and video data \cite{RaoRuderman, Dickstein1}, where symmetries described by straight-line and rotational transformations were discovered.
Other work has made strides in symmetry discovery by restricting the types of symmetries being sought. In one case, detection was limited to compact Abelian Lie groups \cite{Cohen1}, since such groups exhibit mathematically convenient properties, and used for the purpose of learning disentangled representations. Another case uses meta-learning to discover any \textit{finite} symmetry group \cite{Zhou1}. Finite groups have also been used in symmetry discovery in representation learning \cite{Anselmi1}. In physics-based applications of machine learning, a method has been constructed which is able to discover any classical Lie group symmetry \cite{Forestano1}. Symmetry discovery of shapes has also been explored \cite{Je1}.

Other work has focused on detecting affine transformation symmetries and encoding the discovered symmetries automatically into a model architecture. Three such methods identify Lie algebra generators to describe the symmetries, as we do herein. For example, \textit{augerino} \cite{Augerino} attempts to learn a distribution over augmentations, subsequently training a model with augmented data. The \textit{Lie algebra convolutional network} \cite{Lconv}, which generalizes Convolutional Neural Networks in the presence of affine symmetries, uses infinitesimal generators represented as vector fields to describe the symmetry. SymmetryGAN \cite{Desai1} has also been used to detect rotational symmetry \cite{LieGAN}.

Another notable contribution to efforts to detect symmetries of data is \textit{LieGAN}. LieGAN is a generative-adversarial network intended to return infinitesimal generators of the continuous symmetry group of a given dataset \cite{LieGAN}. LieGAN has been shown to detect continuous affine symmetries, including transformations from the Lorentz group. It has also been shown to identify discrete symmetries such as rotations by a fixed angle.

While most continuous symmetry detection methods attempt to discover symmetries which are affine transformations, the representation of infinitesimal generators using vector fields has led to the discovery of continuous symmetries which are not affine \cite{Ko1, Shaw1}. We note that our work here is an extension of \cite{Shaw1}. In \cite{Ko1}, the domains of image data and partial differential equations are examined in particular, and the symmetries are expressed as flows of vector field infinitesimal generators.
However, the approach taken therein differs fundamentally from our approach. In both approaches, the infinitesimal generators are iteratively updated; however, in \cite{Ko1}, training data is explicitly transformed by the flow of the vector field, which flow is estimated using a neural network ODE solver. In our work, the infinitesimal generators are taken to act directly upon functions of interest, eliminating the need for data augmentation.  Furthermore, the performance of our approach does not depend on the selection of an ODE solver, in contrast to \cite{Ko1}.

Continuous symmetry detection differs from discrete symmetry detection \cite{Zhou1} since the condition $f \circ S = f$ must hold for all values of the continuous parameter of $S$. This is corroborated by the increasingly complex methods used to calculate even simple symmetries such as planar rotations \cite{Augerino, Lconv, LieGAN}. Some methods introduce discretization, where multiple parameter values are chosen and evaluated.  LieGAN does this by generating various transformations from the same infinitesimal generator \cite{LieGAN}. However, introducing discretization increases the complexity of continuous symmetry detection. 
A vector field approach addresses the issue of discretization. Our vector field-based method reduces the required model complexity of continuous symmetry detection while offering means to detect symmetries beyond affine transformations.

\subsection{Symmetry Enforcement}

We next review methods for enforcing symmetries in machine learning models.  Some methods seek to enforce symmetry by augmenting the training dataset according to known symmetries \cite{Bergman1}. This is common practice in image classification tasks, where augmentations include rotated copies of images in the training set. \textit{Augerino} attempts to enforce symmetry using augmented data, although the symmetries are discovered from the data rather than given \textit{a priori}. Another established method of enforcing symmetry is feature averaging, which, in contrast to data augmentation, guarantees symmetry enforcement~\cite{Lyle1}.
Other sources construct target symmetry-invariant and -equivariant models using infinitesimal generators \cite{Lconv, LieGAN}. Some previous work addresses specific cases including the special case of compact groups \cite{Bloem-Reddy1} and  equivariant CNNs on homegeneous spaces \cite{Cohen2}. Other works prove universal approximation results for invariant architectures \cite{Maron1, Keriven1, Yarotsky1}.

Physics-Informed Neural Networks (PINNs) \cite{Raissi1} may be seen as enforcing a type of symmetry. Here, model training is regularized using differential constraints which represent the governing equations for a physical system. Our regularization method of symmetry enforcement 
adopts a similar approach, though the differential constraints obtained using infinitesimal generators do not generally have the interpretation of defining governing equations for a physical system. Notably, the topic of symmetry enforcement via regularization has been independently proposed, approximately simultaneously with our own efforts \cite{Otto1}.

\section{Background}
\label{sec:background}

In this section, we give an overview of vector fields and flows and their connection to symmetry. We also discuss Killing vectors and isometries. 
Throughout, we assume a dataset $\mathcal{D}=\{x_1,\dots,x_r\}$ with $x_i\in\mathbb{R}^n$. Various machine learning functions of interest may be defined for a given dataset, such as an underlying probability distribution or class probabilities. The object of symmetry detection in data is to discover functions $S:\mathbb{R}^n \to \mathbb{R}^n$ that preserve a machine learning function of interest: that is, for a machine learning function $f$, $f \circ S = f$. We consider a continuous symmetry of a particular machine learning function to be a transformation $S$ which is continuously parameterized, such as a rotation in a plane by an angle $\theta$, with $\theta$ being the continuous parameter. We deal exclusively with continuous symmetries.

\subsection{Vector Fields and Flows}

We now provide some background on vector fields and their associated flows. We refer the reader to literature on the subject for additional information \cite{Lee1}. Suppose that $X$ is a smooth\footnote{That is, a vector field with $\mathcal{C}^{\infty}$ coefficient functions.} (tangent) vector field on $\mathbb{R}^n$:
\begin{equation}
    X = \alpha^i \partial_{x^i} := \sum_{i=1}^n \alpha^i \partial_{x^{i}},
\end{equation}
where $\alpha^i:\mathbb{R}^n \to \mathbb{R}$ for $i \in [1,n]$, and where $\{x^i\}_{i=1}^n$ are coordinates on $\mathbb{R}^n$.  $X$ assigns a tangent vector at each point and can also be viewed as a function on the set of smooth, real-valued functions. E.g. if $f:\mathbb{R}^n \to \mathbb{R}$ is smooth,
\begin{equation}{\label{vf}}
    X(f) = \sum_{i=1}^n \alpha^i \frac{\partial f}{\partial{x^{i}}}.
\end{equation}
For example, for $n=2$, if $f(x,y)=xy$ and $X=y\partial_x$, then $X(f) = y^2$. $X$ is also a \textit{derivation} on the set of smooth functions on $\mathbb{R}^n$: that is, 
as a mapping from $C^{\infty}(\mathbb{R}^n)$ to itself, it behaves like a derivative in the sense that it is linear and satisfies the product rule.
A flow on $\mathbb{R}^n$ is a smooth function $\Psi: \mathbb{R} \times \mathbb{R}^n \to \mathbb{R}^n$ which satisfies
\begin{equation}
    \Psi(0,p) = p, \qquad \Psi(s,\Psi(t,p)) = \Psi(s+t,p)
\end{equation}
for all $s,t \in \mathbb{R}$ and for all $p \in \mathbb{R}^n$. A flow is a 1-parameter group action. An example of a flow $\Psi: \mathbb{R} \times \mathbb{R}^2 \to \mathbb{R}^2$ is
\begin{equation}{\label{flowEx}}
    \Psi(t,(x,y)) = (x\cos(t) - y\sin(t), x\sin(t)+y\cos(t)),
\end{equation}
with $t$ being the continuous parameter known as the flow parameter. This flow rotates a point $(x,y)$ about the origin by $t$ radians. 

For a given flow $\Psi$, one may define a (unique) vector field $X$ as given in Equation \ref{vf}, where each function $\alpha^i$ is defined as
$ 
    \alpha^i = \left( \frac{\partial \Psi}{\partial t} \right) \bigg|_{t=0}.
$ 
Such a vector field is called the infinitesimal generator of the flow $\Psi$. For example, the infinitesimal generator of the flow given in Equation \ref{flowEx} is $-y\partial_x + x\partial_y$.

Conversely, given a vector field $X$ as in Equation \ref{vf}, one may define a corresponding flow as follows. Consider the following system of differential equations:
\begin{equation}{\label{vfToFlow}}
    \frac{d x^i}{dt} = \alpha^i, \qquad x^i(0) = x^i_{0}.
\end{equation}
Suppose that a solution $\mathbf{x}(t)$ to Equation \ref{vfToFlow} exists for all $t \in \mathbb{R}$ and for all initial conditions $\mathbf{x}_0 \in \mathbb{R}^n$. Then the function $\Psi:\mathbb{R} \times \mathbb{R}^n \to \mathbb{R}^n$ given by
$ 
    \Psi(t,\mathbf{x}_0) = \mathbf{x}(t)
$ 
is a flow. The infinitesimal generator corresponding to $\Psi$ is $X$. For example, to calculate the flow of $-y\partial_x + x\partial_y$, we solve
\begin{equation}
    \dot{x} = -y, \quad \dot{y} = x, \qquad x(0) = x_0, \quad y(0) = y_0
\end{equation}
and obtain the flow $\Psi(t,(x_0,y_0))$ defined by Equation \ref{flowEx}. It is generally easier to obtain the infinitesimal generator of a flow than to obtain the flow of an infinitesimal generator.


\subsection{Vector Fields and Symmetry}

We can now connect vector fields and flows with symmetry. A smooth function $f:\mathbb{R}^n \to \mathbb{R}$ is said to be $X$-invariant if $X(f) = 0$ identically for a smooth vector field $X$. The function $f$ is $\Psi$-invariant if, for all $t \in \mathbb{R}$, $f = f(\Psi(t,\cdot))$ for a flow $\Psi$. If $X$ is the infinitesimal generator of $\Psi$, $f$ is $\Psi$-invariant if and only if $f$ is $X$-invariant. If the function $f$ is a machine learning function for a given data set, our strategy is to identify vector fields $X_i$ for which $f$ is invariant. Each vector field $X_i$ for which $f$ is invariant is associated, explicitly or implicitly, with a flow $\Psi_i$, each of which is a 1-parameter group action, the collection of which generate the multi-parameter group action under which $f$ is invariant. By identifying continuous symmetries by approximately solving $X(f)=0$ for parametric $X$, 
we boil continuous symmetry discovery down to a tractable optimization problem.

The price of this indirect characterization of continuous symmetry is that the group action is only implicitly represented by its infinitesimal generators.
The problem of building machine learning models that are invariant to symmetries corresponding to given infinitesimal generators is called \emph{symmetry enforcement},
and we describe our results on this in Section~\ref{sec:enforcement}.

\subsection{Infinitesimal Generators of Multi-Parameter Group Actions}

We emphasize that our methods also apply to symmetries given by multi-parameter group actions, given recent work seeking to compare our method \cite{hu1}. 

Let $G \subseteq \mathbb{R}^s$ be a group, and suppose $G$ acts on $\mathbb{R}^n$: that is, for $g_1, g_2 \in G$ and for $x \in \mathbb{R}^n$, there is a function $\Psi: G \times \mathbb{R}^n \to \mathbb{R}^n$ such that (assuming the group operation is vector addition)
\begin{equation}{\label{multigroup}}
    \Psi(\mathbf{0},x) = x, \qquad \Psi(g_2,\Psi(g_1,x)) = \Psi(g_1+g_2,x).
\end{equation}
The use of the symbol $\Psi$ to denote a multi-parameter group action is not accident, as a flow is a 1-parameter group action. Let $\{v_i\}_{i=1}^s$ be a basis for the tangent space of $G$ at $\mathbf{0}$, the group identity element. Lastly, let $\sigma$ be a curve in $G$ for which $\sigma(t_0) = \mathbf{0}$ and $\dot{\sigma}(t_0)=v_i$ for $t_0 \in \mathbb{R}$. The infinitesimal generator $X_i$ corresponding to $v_i$ is given by
\begin{equation}{\label{multigroupX}}
    X_i = \left(\dfrac{d}{dt} \Psi(\sigma(t),x)\right) \bigg|_{t = t_0}.
\end{equation}
For example, consider the group $G=\mathbb{R}^3$ acting on $\mathbb{R}^2$ via
\begin{equation*}
    \Psi((a,b,\theta),(x,y)) = \left( x\cos(\theta) - y\sin(\theta) + a, x\sin(\theta) + y\cos(\theta) + b \right).
\end{equation*}
Given the following three curves,
$ 
    \sigma_a(t) = (t,0,0), \qquad \sigma_b(t) = (0,t,0), \qquad \sigma_{\theta}(t) = (0,0,t),
$ 
we find that
$ 
    X_a = \dfrac{d}{dt} \left( t,0 \right) |_{t=0} = \partial_x, \qquad X_b = \dfrac{d}{dt} \left( 0,t \right) |_{t=0} = \partial_y,
 $ \\ 
$ 
    X_{\theta} = \dfrac{d}{dt} \left( x\cos(\theta) - y\sin(\theta), x\sin(\theta) + y\cos(\theta) \right) |_{t=0} = -y\partial_x + x\partial_y.
$ 
For each of these vector fields, a corresponding flow can be computed, which flows we call $\Psi_a$, $\Psi_b$, and $\Psi_{\theta}$, respectively. In terms of the original parameters, these flows are given as
$ 
    \Psi_a(a,(x,y)) = (x+a,y), 
    \Psi_b(b,(x,y)) = (x,y+b), $ and
    $\Psi_{\theta} = \left( x\cos(\theta) - y\sin(\theta), x\sin(\theta) + y\cos(\theta) \right)$.
While each of these flows are, individually, 1-parameter group actions, it is clear that the infinitesimal generators $X_a$, $X_b$, and $X_{\theta}$ are the infinitesimal generators for the multi-parameter group action given in Equation (\ref{multigroup}). Thus, discovering vector field infinitesimal generators which annihilate a fixed (smooth) function applies to multi-parameter group actions and not solely to 1-parameter group actions.

\subsection{Killing Vectors and Isometries}




Suppose that a manifold $\mathcal{M}$ is equipped with a Riemannian metric tensor $g$. While the metric tensor is fundamentally a family of inner products defined on the tangent bundle $T\mathcal{M}$ of $\mathcal{M}$, it also induces a distance measure $d$ on $\mathcal{M}$, giving $\mathcal{M}$ the structure of a metric space. A Killing vector $X \in T\mathcal{M}$ of $g$ can be defined as a vector field which annihilates $g$ by means of the Lie derivative:
\begin{equation}{\label{kvEq}}
    \mathcal{L}_{X}g = 0.
\end{equation}
If a Killing vector $X$ is the infinitesimal generator of a flow $\Psi$, then for two points $p,q \in \mathcal{M}$,
$ 
    d(\Psi(t,p), \Psi(t,q)) = d(p,q)
$ 
for any value of the flow parameter $t$, so that $\Psi$ is an isometry of the distance measure $d$ induced by the metric tensor $g$. We note that the set of Killing vectors of a metric $g$ form a vector space as well as a Lie algebra (using the Lie bracket for vector fields).
Killing vectors of a particular metric $g$ can be calculated by solving Equation~\ref{kvEq}, which defines a system of overdetermined, linear, homogeneous, partial differential equations known as the Killing equations \cite{Gover1}, in general numerically. 
\footnote{Computational methods have been developed which reduce the complexity of the Killing equations \cite{Shaw2}. Herein, such methods are not needed, since the Killing equations for our experiments are sufficiently simple.} 

In the context of machine learning, data with continuous-valued features are usually assumed not only to lie in $\mathbb{R}^n$, but on the Riemannian manifold $\mathbb{R}^n$ equipped with the Euclidean metric tensor, which induces the Euclidean distance measure.  In this context, the Killing vectors form a proper subspace of the generators of the affine transformations. 
However, the Euclidean metric tensor is not the only one used in machine learning: e.g., a different metric tensor may be learned by means of a pullback function induced from a dimensionality reduction algorithm \cite{Sun1}.  In such contexts, the Killing vectors may generate non-affine transformations.
In Section \ref{sec:isometryExp}, we encounter a metric tensor induced by a quadratic feature map, and we restrict our symmetry search space to vector fields which are Killing vectors.

\section{Problem Formulation}
\label{sec:formulation}

Here we precisely formulate the symmetry discovery problem in a way that will inspire our proposed solution.  A \emph{data-generating distribution} $\Dist$ on a differentiable manifold $\Man$ embedded in $\R^n$ is fixed but unknown to us.  A model class $\Hyp$ consisting of vector fields is also fixed.  For instance, $\Hyp$ may be the set of vector fields with coefficient functions that are polynomials with degree at most $d$.
For $X \in \Hyp$ and a smooth function $f:\R^n \to \R$, we denote by 
$\ell_f(X, z) = \ell(X[f](z))$ the \emph{loss} of $X$ at the point $z$ (e.g.,  $\ell_f(X, z) := X[f](z)^2$).  We then define the \emph{risk} of $X$ as
\begin{align}
    R_f(X) := \E_{z \sim \Dist}[\ell_f(X, z)].
\end{align}
Given a dataset $D := \{x_1, ..., x_r\}$ sampled independently from $\Dist$ and a machine learning function $f$, our goal is to identify infinitesimal generators $X \in \Hyp$ that minimize $R_f(X)$.

Formally, we define a \emph{symmetry learner} $L$ to be a function from datasets to elements of $\Hyp$.  We say that $L$ is $(\epsilon, \delta)$-probably approximately correct (PAC) if there is some $r_0(\epsilon, \delta)$ such that, given $r \geq r_0(\epsilon, \delta)$ data samples, for all data-generating distributions $\Dist$, with probability at least $1-\delta$ over sampled datasets, $L$ outputs an $X$ with risk satisfying the following:
\begin{align}
    R_f(X) \leq \epsilon + \inf_{\hat{X} \in \Hyp} R_f(\hat{X}).
\end{align}
Our two goals, then, are symmetry discovery and symmetry enforcement.

    While the estimation of machine learning functions is required prior to symmetry discovery, the novelty contained in the conference paper for which this is an extension dealt primarily with \textit{level set estimation}, which is not the subject of the experiments contained herein. We therefore refer the reader to the original conference paper \cite{Shaw1} for information regarding the estimation of machine learning functions.

\section{Methods}
\label{sec:methods}




There are several components to our methodology. 
First is the estimation of the infinitesimal symmetries of machine learning functions, detailed in Section \ref{sec:method2}. Another component of our methodology is isometry discovery, detailed in Section \ref{sec:IsometryDiscovery}.  In Section~\ref{sec:freedom}, we remark on the ill-posedness of na\"ive formulations
of the symmetry discovery problem.  In Section~\ref{sec:robust}, we give our results analyzing the robustness of symmetry discovery to error in estimation of the machine learning function.
We describe how we evaluate our methods by comparing discovered symmetries to ground-truth symmetries in Section~\ref{sec:GTsym}.
In Section~\ref{sec:enforcement}, we present two techniques for enforcement of discovered symmetries in downstream machine learning models.

\subsection{Estimating the Infinitesimal Generators}
\label{sec:method2}

We first describe the intuition behind our method.
With scalar-valued machine learning functions $f_i$, we construct a vector-valued function $F$ whose components are defined by the functions $f_i$. We can then obtain vector fields which annihilate the components of $F$ by calculating nullspace vectors of $J$, the Jacobian matrix of $F$. 
More vector fields may annihilate the functions than can be identified by the nullspace of $J$, since some vector fields, though linearly independent over the field of real numbers, are not linearly independent over the ring of $\mathcal{C}^{\infty}$ functions. However, our method is inspired by the notion of looking to the nullspace of $J$.
First, we construct a model class of vector fields whose components are norm-constrained linear combinations of pre-determined features, as in a general additive model, though we primarily use polynomial features herein.  The advantage of the choice of polynomial features is that this results in a nested sequence of model classes (indexed by the maximum polynomial feature degree) that is dense in the set of smooth vector fields. The coefficients of the  linear combination of a given model reside in a matrix $W$, with the columns of $W$ corresponding to the coefficients for a single vector field. We then estimate $W$ by solving the following minimization problem, which is constrained by the coefficient norm bound defining the model class: 
\begin{equation}{\label{estVF}}
    W := \argmin_{W_*} \ell(M W_*), 
\end{equation}
where $M = M(\mathcal{J},B)$ is the \textit{Extended Feature Matrix} computed using the array $\mathcal{J}$ of Jacobian matrices at each point, $B$ is the feature matrix, and $\ell(\cdot)$ is the sum of losses over all columns of $W_*$. 
The matrix $M$ is computed via Algorithm \ref{alg:alg1}. 


\begin{algorithm}
\caption{Constructing the Extended Feature Matrix $M$}\label{alg:alg1}
\begin{algorithmic}
\State $m \gets$ number of features
\State $n \gets$ dimension of space
\State $N \gets$ number of points in the dataset
\State $\mathcal{J} \gets Jacobian(F)(x_i)$ \Comment{3-d array of Jacobian matrices at each point}

\For{i in range($N$)}
    \State $row_0 \gets padded(B_i)$ \Comment{m(n-1) 0's are appended to the $i^{th}$ row of $B$}
    \For{j in range(1,m-1)}
    
    $row_{j} \gets roll(row_0,j\cdot m)$ \Comment{Each subsequent row is displaced $m$ entries to the right.}
    \EndFor
    \State $b_i \gets Matrix([row_0, \dots row_{m-1}])$
    \State $mat_{i} \gets \mathcal{J}_i b_i$ \Comment{Multiply the Jacobian of $F$ at $x_i$ by the matrix $b_i$}
    \EndFor
\State $M \gets StackVertical([mat_0, \dots mat_{N-1}])$ \Comment{This matrix has size $nN \times nm$.}

\end{algorithmic}
\end{algorithm}

A solution to Equation (\ref{estVF}) is estimated using constrained optimization of the selected loss function, for which we turn to \textit{McTorch} \cite{mctorch}. Using this PyTorch-compatible library, we can choose from a variety of supported loss functions and (manifold) optimization algorithms. To decide on the number of columns of $W$, we sequentially increase this number until an ``elbow'' in the cost function values is reached. Despite claims to the contrary \cite{hu1}, this approach is amenable to automatic discovery of the number of symmetries, due to existing methods of automatic ``elbow point'' discovery \cite{Onumanyi1}.

However, we find the same unwanted freedom as with level set estimation \cite{Shaw1}, since $X(f)=0$ implies that for any smooth function $h$, $hX(f)=0$. We commonly assume that the components of $X$ are polynomial functions, in which case it is possible that a such a pair $X$ and $hX$ may exist in the search space. There are a few ways of dealing with this issue: primarily, by reducing the search space (to, say, Killing vectors) so that $X$ and $hX$ are not generally both in the search space. 
This issue is further expounded upon in Section \ref{sec:freedom}.

\subsection{Isometry Discovery}
\label{sec:IsometryDiscovery}

As mentioned in Section \ref{sec:freedom}, the fact that a function is $X$-invariant if and only if it is also $fX$-invariant can lead to complications when estimating vector field symmetries. One way  to handle this issue is to restrict the search space of symmetries so that such multiples are not present in the search space. In this section, we outline a technique to restrict the search space to isometries of a distance measure induced by a Riemannian (or pseudo-Riemannian) metric tensor.

Our model assumption is that our data lies on a manifold $\mathcal{M}$ equipped with a metric tensor $g$, and that a machine learning function $f$ on $\mathcal{M}$ is smooth. As stated before, a basis for the vector space of Killing vectors $\{X_i\}_{i=1}^{s}$ can be computed by solving Equation \ref{kvEq}, which is a system of overdetermined, first order, homogeneous partial differential equations known as the Killing equations. Once a basis is identified, we define $X$ to be an arbitrary linear combination of the basis vector fields:
$ 
    X = \sum_{i=1}^{s} c_i X_i.
$ 
We then optimize the real coefficients $\{c_i\}_{i=1}^s$ in the equation $X(f)=0$ subject to the constraint $\sum_{i=1}^s c_i^2=1$. This is a case of Equation \ref{estVF}  where the feature matrix $B_{sr}$ is defined by the components of the Killing vectors, $\mathcal{J}$ is the Jacobian of $f$, and the single column of $W$ is defined by the coefficients $c_i$. To search for multiple Killing vectors which annihilate $f$, one appends additional columns to $W$.

\subsection{A Note about the Freedom Present in Identifying Vector Field Symmetries}
\label{sec:freedom}

Given a vector field $X$ and a vector field $fX$, a function $h$ is $X$-invariant if and only if it is $fX$-invariant. If $h$ is $X$-invariant, the flow of $X$ is a symmetry of $h$, so that the flow of $fX$ is also a symmetry of $h$: this follows from our discussion of vector fields in Section \ref{sec:background}.

    For example, a circle centered at the origin, characterized by $F=0$ where $F(x,y) = x^2+y^2-r^2$ for some real number $r$, exhibits rotational symmetry described by $X=-y\partial_x + x\partial_y$. However, $\frac{1}{y} X (F) = 0$, so that the flow of the vector field $-\partial_x + \frac{x}{y} \partial_y$, where defined, is also a symmetry of the circle.

    This may seem to introduce a theoretical problem requiring greater care when detecting non-affine symmetry. However, our proposed methods of constructing/training models which are invariant with respect to group actions requires only the identification/learning of functions which are invariant with respect to the vector field infinitesimal generators. Thus, both $X$ and $fX$ are valid answers for ``ground truth'' symmetries, since a function $h$ is $X$-invariant if and only if it is $fX$-invariant.
    

    This non-uniqueness presents a challenge when estimating suitable vector fields. When estimating the symmetries via constrained regression, $X$ and $fX$ may or may not both be present in the search space. As mentioned in Section \ref{sec:methods}, one potential workaround is to do symmetry estimation  symbolically. Consider the example of $F=0$ given previously in this section. The Jacobian matrix of $F$ is given as
    $ 
        J = \begin{bmatrix}
            2x & 2y & 2z
        \end{bmatrix},
    $ 
    and a basis for the nullspace of $J$ (using functions as scalars) is $\{ [-y,x,0]^T, [0,-z,y]^T \}$,
    which vectors correspond to tangent vectors $-y\partial_x + x\partial_y$ and $-z\partial_y + y\partial_z$, respectively. The flows of these vector fields correspond with rotations (about the origin) in the $(x,y)$ and $(y,z)$ planes, respectively. However, only two vector fields can be recovered using this method, which in this case has neglected another symmetry, namely $-z\partial_x + x\partial_z$, corresponding to rotations in the $(x,z)$ plane. In fact, each of the three vector fields corresponding to rotations about a coordinate axis are isometries of the sphere described by $F=0$. We should note, however, that the generator for rotations in the $(x,z)$ plane can be expressed as a linear combination of the other rotations using functions as scalars:
    $ 
        \dfrac{z}{y} \left( -y\partial_x + x\partial_y \right) + \dfrac{x}{y} \left( -z\partial_y + y\partial_z \right) = -z\partial_x + x\partial_z.
    $ 
    Thus, it is plausible that applications exist in which a symbolic approach to continuous symmetry discovery may be sufficient. Our approach does not rely on symbolic software, however.

    The issue of unwanted freedom can be addressed by reducing the search space of vector fields to linear combinations of geometrically significant vector fields. In Section \ref{sec:IsometryDiscovery}, we discuss the restriction of the search space to Killing vectors.
        
    Another point which may help to address concerns about our handling of the difficult nature of non-affine symmetry detection is in the relationship between the flows of $X$ and $fX$ generally. The trace of the flow of $X$ through the point $p$ is characterized by the level set $h_i=c_i$, where $\{h_i\}$ is a complete set of $X$-invariant functions. Since a function is $X$-invariant if and only if it is $fX$-invariant, a complete set of invariant functions for $fX$ can be taken to be $\{h_i\}$ without loss of generality. Thus, the level set $h_i=c_i$ also characterizes the trace of the flow of $fX$ through the point $p$. This argument assumes that the flows of $X$ and $fX$, as well as the vector fields themselves, are well-defined in an open neighborhood of the point $p$.
    
    In fact, in our example above with $F = x^2+y^2-r^2$, the flow of $-\partial_x + \frac{x}{y} \partial_y$, assuming $y>0$, is given as
    $ 
        \Phi(t,(x,y)) = \left(x-t, \sqrt{y^2+2tx-t^2} \right),
    $ 
    which trace is (part of) a circle with flow parameter $-x$. The trace of this flow through a point is equivalent to the trace of $X$ through the same point.

\subsection{Robustness analysis}
\label{sec:robust}

Here we present a theorem justifying the estimation of infinitesimal generators of functions that are themselves approximations of an unknown ground truth function (e.g., machine learning functions learned from data).  In the theorem, $f$ plays the role of a ground truth function whose generators we wish to estimate, and $\hat{f}$ plays the role of an approximation of $f$.

\begin{theorem}
    \label{thm:robustness-theorem}

    Let $f:\R^n \to \R$ and $\hat{f}:\R^n \to \R$ be $C_{\infty}$ functions.  Suppose that
    $\|f - \hat{f}\|_{\infty} \leq \epsilon$.  Suppose that $\hat{X}$ is a vector field satisfying
    $\hat{X}(\hat{f}) = 0$ and satisfies the following Lipschitz property: for any smooth functions $g_1, g_2$, 
    \begin{align}
        \|\hat{X}(g_1) - \hat{X}(g_2)\|_{\infty} \leq \lambda \cdot \|g_1 - g_2\|_{\infty}.
    \end{align}
    Let $\hat{\Psi}(t, z)$ be the flow associated with $\hat{X}$.
    Then 
    \begin{align}
        \|\hat{X}(f)\|_{\infty}
        \leq \lambda \epsilon.
        \label{expr:almost-annihilator}
    \end{align}
    As a consequence, we have, for any $t \geq 0$ and any $z \in \R^n$,
    \begin{align}
        |f(\hat{\Psi}(t, z)) - f(z)|
        \leq \lambda \epsilon t.
        \label{expr:almost-invariance}
    \end{align}
    
\end{theorem}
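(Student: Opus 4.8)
The plan is to prove the two bounds in sequence, since the flow estimate \eqref{expr:almost-invariance} will follow from the pointwise annihilator bound \eqref{expr:almost-annihilator} together with the fundamental relationship between a vector field and its flow. For the first bound, I would exploit that $\hat{X}$ acts as a linear derivation, so that $\hat{X}(f) = \hat{X}(f) - \hat{X}(\hat{f})$ whenever $\hat{X}(\hat{f}) = 0$. Applying the assumed Lipschitz property with $g_1 = f$ and $g_2 = \hat{f}$ then gives $\|\hat{X}(f)\|_{\infty} = \|\hat{X}(f) - \hat{X}(\hat{f})\|_{\infty} \leq \lambda \|f - \hat{f}\|_{\infty} \leq \lambda \epsilon$, which is exactly \eqref{expr:almost-annihilator}.

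For the second bound, the central step is to establish the differentiation-along-the-flow identity
\[
    \frac{d}{dt} f(\hat{\Psi}(t,z)) = \hat{X}(f)\big(\hat{\Psi}(t,z)\big).
\]
This follows from the definition of the flow as the solution to the system \eqref{vfToFlow}: writing $\hat{X} = \alpha^i \partial_{x^i}$, the flow satisfies $\tfrac{d}{dt}\hat{\Psi}^i(t,z) = \alpha^i(\hat{\Psi}(t,z))$, and the chain rule together with the definition \eqref{vf} of $\hat{X}(f)$ yields the claimed identity. I would then integrate from $0$ to $t$ and use the flow property $\hat{\Psi}(0,z) = z$ to obtain
\[
    f(\hat{\Psi}(t,z)) - f(z) = \int_0^t \hat{X}(f)\big(\hat{\Psi}(s,z)\big)\, ds.
\]
Taking absolute values, moving the modulus inside the integral, and bounding the integrand uniformly by $\|\hat{X}(f)\|_{\infty} \leq \lambda \epsilon$ from the first part gives $|f(\hat{\Psi}(t,z)) - f(z)| \leq \lambda \epsilon t$ for $t \geq 0$, which is \eqref{expr:almost-invariance}.

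The first bound is essentially immediate from the hypotheses, so the main obstacle is the justification of the differentiation-along-the-flow identity. The argument itself is routine once the flow is known to exist and be differentiable, but it relies on $\hat{\Psi}$ being well-defined for the relevant values of $t$; since the theorem posits the existence of the flow associated with $\hat{X}$, I would invoke that hypothesis directly rather than prove global existence. A secondary point worth stating carefully is that the uniform ($\infty$-norm) control of $\hat{X}(f)$ is precisely what allows the integrand to be bounded independently of the point $\hat{\Psi}(s,z)$ along the trajectory; this is what makes the final estimate depend linearly on $t$ without requiring any additional control over the geometry of the flow.
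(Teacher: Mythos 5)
Your proposal is correct and follows essentially the same route as the paper's proof: the first bound via linearity of $\hat{X}$ plus the Lipschitz hypothesis applied to $f$ and $\hat{f}$, and the second via the chain-rule identity $\frac{d}{dt}f(\hat{\Psi}(t,z)) = \hat{X}(f)(\hat{\Psi}(t,z))$, integration from $0$ to $t$, and the uniform bound from the first part. No gaps.
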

\begin{proof}
    To prove (\ref{expr:almost-annihilator}), we write
    $ 
        \hat{X}(f)
        = \hat{X}(f - \hat{f} + \hat{f})
        = \hat{X}(f - \hat{f}) + \hat{X}(\hat{f})
        = \hat{X}(f - \hat{f}),
    $ 
    where the second equality is by linearity of $\hat{X}$, and the third is by the fact that $\hat{X}(\hat{f})$ annihilates $\hat{f}$.
    Taking norms on both sides and using linearity of $\hat{X}$ and then the Lipschitz property, we have
    $ 
        \|\hat{X}(f)\|_{\infty}
        \leq \lambda \epsilon. 
    $ 

    To prove (\ref{expr:almost-invariance}), we have
    \begin{equation}
        |f(\hat{\Psi}(t, z)) - f(z)|
        = |f(\hat{\Psi}(t, z)) - f(\hat{\Psi}(0, z))|
        \label{expr:supremum-upper-bound}
        = \left|\int_{0}^t \frac{\partial f(\hat{\Psi}(\tau, z))}{\partial \tau} ~d\tau\right|
        \leq t \cdot \sup_{\tau \in [0, t]}\left| \frac{\partial f(\hat{\Psi}(\tau, z))}{\partial \tau}\right|.
    \end{equation}
    Thus, it remains to upper bound $\left| \frac{\partial f(\hat{\Psi}(\tau, z))}{\partial \tau}\right|$ for all $\tau$.  By the chain rule,
    \begin{align}
        \frac{\partial f(\hat{\Psi}(\tau, z))}{\partial \tau}
        = \nabla f(\hat{\Psi}(\tau, z)) \cdot \frac{\partial}{\partial \tau} \hat{\Psi}(\tau, z)
        = \nabla f(\hat{\Psi}(\tau, z)) \cdot \hat{X}(\hat{\Psi}(\tau, z))
        = \hat{X}(f)(\hat{\Psi}(\tau, z)).
    \end{align}
    The second equality is by definition of the flow associated with a vector field.  The third equality is by definition of the vector field applied to functions.  Applying Equation (\ref{expr:almost-annihilator}), we finally get
    $ 
        \left| \frac{\partial f(\hat{\Psi}(\tau, z))}{\partial \tau}\right|
        \leq \lambda \epsilon.
    $ 
    Plugging this into Equation (\ref{expr:supremum-upper-bound}) completes the proof.
\end{proof}

Theorem~\ref{thm:robustness-theorem} says that a vector field that annihilates an approximation of the ground truth function $f$ has a flow that \emph{approximately} serves as an invariance for $f$, where the degree of approximation depends on the smoothness of the discovered vector field and the parameter $t$ of the flow.
Lipschitzness of the vector field $\hat{X}$ can be achieved by constraining it to have, e.g., polynomial coefficients with bounded degrees and monomial coefficients.

\subsection{Comparison of Discovered Symmetries to Ground-Truth Symmetries}
\label{sec:GTsym}

To evaluate symmetry discovery methods, it is important to quantify the ability of a given method to recover specific ground truth symmetries in synthetic experiments.  This is a distinct problem from evaluating the extent to which the discovered symmetry is approximately an infinitesimal generator of an invariant flow, and thus evaluation via the empirical loss is not sufficient. For a ground truth vector field $X$ and an estimated vector field $\hat{X} = \sum_{i=1}^N \hat{f}_i \partial_{x^i}$, we define the angle between them as
\begin{equation}{\label{similarity}}
    \cos\left(\theta(X,\hat{X})\right) = \dfrac{1}{\int_{\Omega}d\mathcal{M}}\mathbb{E}\left[ \dfrac{|\langle X,\hat{X} \rangle_g|}{||X||_{g} \cdot ||\hat{X}||_g} \right],
\end{equation}
where $\langle X,\hat{X} \rangle_g = \sum_{i,j} f_i \hat{f}_j g_{ij}$, $||X||_g = \sqrt{\langle X,X \rangle_g}$, and where 
$ 
    \mathbb{E}\left[ u(\mathbf{x}) \right] = \int_{\Omega} u(\mathbf{x}) d\mathcal{M},
$ 
with the region $\Omega$ being defined by the range of a given dataset. Ordinarily, this is  the full range of the dataset. This formula is a generalization of the formula given in \cite{Shaw1} in the case where the manifold and/or metric is not assumed to be Euclidean.

The notion of a ``cosine similarity'' between vector fields also induces a method by which the extent to which a particular smooth function is invariant can be quantified in relation to other functions. For a fixed vector field $X$, a function $f$ is $X$-invariant if and only if $X$ is orthogonal to the gradient of $f$, which vector field we denote $X_f$. Thus, the extent to which $f$ is $X$-invariant can be quantified in terms of the cosine of the angle between $X$ and $X_f$, given in Equation (\ref{similarity}): the closer to 0 this value is, the more $X$-invariant $f$ is.

\subsection{Symmetry Enforcement}
\label{sec:enforcement}
Here, we detail two approaches to enforcement of discovered symmetries in downstream machine learning models.
In the first, we show that we can construct a polynomial basis for the space of features invariant to 
a given vector field $X$.  By taking polynomial degrees to be sufficiently large, we can guarantee that sufficiently smooth machine learning models trained on these features are arbitrarily close to invariant to $X$.
The second method induces approximate invariance in a new model by regularization of the training objective based on $X$.

\subsubsection{Construction of a basis for the invariant feature space}
\label{sec:invariant-basis-construction}

In this section, we assume that we have already estimated a vector field $X$ that annihilates the target function $f$.  Our next goal is to construct a set of feature functions that are (i) \emph{simple}, (ii)  by design are also annihilated by $X$ and thus share the same invariance as $f$, and (iii) can be combined to approximate a large family of ``well behaved'' functions annihilated by $X$, at least on some domain. Concretely, we seek the following properties for our invariant feature set:

\begin{enumerate}
    \item 
        \textbf{Soundness: } No machine learning model that is \emph{not} $X$-invariant can be constructed from the invariant feature set.
    \item 
        \textbf{Completeness: } ``Every'' $X$-invariant functional can be approximated by some machine learning model built on the invariant features.
\end{enumerate}
We do not claim to construct an invariant feature set which satisfies these properties \textit{exactly}: rather, we seek to \textit{approximately} satisfy these properties.

We note that $X$ is a linear differential operator and that the space
$C^{\infty}(\Omega)$ (for an arbitrary compact set $\Omega \subseteq \R^d$) can be endowed with a norm-inducing inner product:
$ 
    \left\langle v, w \right\rangle := \int_{\Omega} v(z)w(z)~dz.
$ 
With this inner product and norm in mind, we will produce a sequence of bases $B_k := \{f_{1,k}, ..., f_{k,k}\}$ with spans $H_k$ and $C$-approximate spans $H_{k,C}$ 
satisfying the following:
\begin{definition}[Completeness and soundness]
    \label{def:completeness-soundness}

    We say that a sequence $\{B_k\}_{k=1}^\infty$ is \emph{complete} for $X$ if for all $F \in \ker(X)$ and $\epsilon > 0$, there exists
    a sequence index $k_0$ such that for all $k \geq k_0$, 
    $ 
        \min_{\hat{F} \in H_k} \| F - \hat{F}\|_2 \leq \epsilon.
    $ 

    We say that $\{B_k\}_{k\geq 1}$ is \emph{sound} for $X$ if for all error tolerances $\epsilon > 0$ and coefficient bounds $C > 0$, there exists a sequence index $k_0$ such that for all $k \geq k_0$ and  $F_k \in H_{k,C}$,
    $ 
        \| X F_n\|_\infty \leq \epsilon.
    $ 
\end{definition}

\paragraph{Least squares approach to basis construction}

Ultimately, we are trying to solve a first-order linear PDE.  We thus adopt the intuition from linear least squares.
We fix a basis $\hat{B} := \{p_j\}_{j \geq 1}$ for the space of $C^{\infty}$ functions.  For example, this could be a basis of orthonormal polynomials (but this is not needed for our construction).  We write also $\hat{B}_k := \{p_j\}_{j=1}^k$ and $\hat{H}_k := \text{span}\{\hat{B}_k\}$.  We also write $\text{span}_{L_1,B} \{S\}$ for the set of linear combinations of elements of any set $S$ with coefficients $L_1$-bounded by $B > 0$.
We would like to solve $Xf = 0$ for $f \in \hat{H}_k$, but in general only the trivial solution exists.  

We perform the following procedure, which returns for any given $k \in \mathbb{N}$ a set of approximate basis functions
$\{f_j\}_{j=1}^k$ for $\ker(X) \cap \hat{H}_k$:
\begin{enumerate}
    \item 
        Form the matrix
        $ 
            Z := [Xp_1, ..., Xp_k  ].
        $ 
    \item 
        We look at the smallest singular values of the matrix $Z^\dagger Z$.  These correspond to eigenvectors that are coefficient vectors in $\ker(Z)$.  Note that we are viewing $Z$ as a linear operator from $\R^{k}\to C^{\infty}$.
        Then $\ker(Z)$ is the set of coefficient vectors generating linear combinations of elements of $\hat{B}$ that lie in
        $\ker(X)$.

        Specifically, let the singular values of $Z^\dagger Z$ be $\sigma_1\geq \cdots \geq \sigma_k$, and let $v_1, ..., v_k \in \R^k$ denote the corresponding singular vectors (eigenvectors).  We denote
        by $k_0 := \min\{ j \in [k] ~|~ \sigma_j \leq \epsilon\}$.  Here, we recall that $\epsilon > 0$ is an error tolerance parameter.

        Define the functions $\{f_j\}_{j=1}^{k-k_0-1}$ as follows:
        $ 
            f_j := \sum_{\ell=1}^k v_{k_0+j-1,\ell} \cdot p_\ell.
        $ 
\end{enumerate}

Having established a set of functions $\{f_j\}_{j=1}^{k-k_0-1}$, our next task is to establish completeness and soundness for this set, which we do in the Sections \ref{sec:completeness} and \ref{sec:soundness}. 

\paragraph{Establishing completeness}
\label{sec:completeness}
    To establish completeness, we need to figure out a natural restriction of the class of invariant $C^{\infty}$ functions that can be uniformly approximated by the restricted span of $\{f_j\}_{j=1}^k$ by taking $k$ large enough.
    We consider the following \emph{approximation target class}.
    \begin{definition}[Approximation target class]
        \label{def:approximation-target-class}
        For any $\gamma > 0, k \in \mathbb{N}, B \in (0,\infty]$,
        we denote by $F_{\gamma,k,B}$ the set of functions $g \in \ker(X)$ satisfying the following:
        $ 
            \|g - \text{span}_{L_1,B} \{v_1, ..., v_k\}\|_{2} \leq \gamma.
        $ 
        We call $F_{\gamma,k,B}$ the approximation target class.
    \end{definition}

    The significance of the approximation target class is that every element of $\ker(X)$ lies in some approximation target class for large enough $k$ and $B$.  This is because $\{p_j\}_{j=1}^\infty$ is a basis for the space of $C^{\infty}$
    functions on $\Omega$.

    \begin{lemma}[Small $Xg$ in $\hat{H}_k$ implies closeness to $\text{span}\{f_1, ..., f_k\}$]
        \label{lemma:small-Xg-implies-close-to-basis}
        Let $\gamma > 0$.
        Suppose that a function $g \in \hat{H}_k$ satisfies
        $ 
            \|Xg\|_2 \leq \gamma.
        $ 
        Then 
        $ 
            \|g - \text{span}\{f_1, ..., f_{k-k_0-1}\}\|_2 
            \leq \frac{\gamma}{\sqrt{\sigma_{k_0-1}}}
            \leq \frac{\gamma}{\sqrt{\epsilon}}.
        $ 
    \end{lemma}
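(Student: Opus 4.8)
The plan is to reduce the statement to a finite-dimensional spectral argument in coefficient space $\R^k$. The key structural observation is that $Z^\dagger Z$ is nothing but the Gram matrix of $\{Xp_1,\dots,Xp_k\}$ in $L_2(\Omega)$, i.e. $(Z^\dagger Z)_{\ell m} = \langle Xp_\ell, Xp_m\rangle$, so it is symmetric positive semidefinite and its ``singular values'' coincide with its eigenvalues $\sigma_1 \geq \cdots \geq \sigma_k$, with an orthonormal eigenbasis $v_1,\dots,v_k$ of $\R^k$. I would also fix $\{p_j\}$ to be orthonormal in the $L_2(\Omega)$ inner product (the orthonormal-polynomial example offered in the construction), so that the coordinate map $c \mapsto \sum_\ell c_\ell p_\ell$ is a linear isometry from $(\R^k,\|\cdot\|_2)$ onto $(\hat H_k, \|\cdot\|_2)$.

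First I would write $g = \sum_{\ell=1}^k c_\ell p_\ell$ with coefficient vector $c \in \R^k$ and note that $Xg = Zc$ by linearity of $X$. Expanding $c = \sum_{j=1}^k a_j v_j$ in the eigenbasis, the hypothesis $\|Xg\|_2 \leq \gamma$ translates into $\|Zc\|_2^2 = c^\dagger Z^\dagger Z c = \sum_{j=1}^k \sigma_j a_j^2 \leq \gamma^2$. Next I would split $c$ into its low-eigenvalue part $c_{\mathrm{low}} := \sum_{j \geq k_0} a_j v_j$ and its high-eigenvalue part $c_{\mathrm{high}} := \sum_{j < k_0} a_j v_j$. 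By construction the $f_j$ are precisely the images under the coordinate map of the eigenvectors indexed by $j \geq k_0$, so $\hat g := \sum_\ell (c_{\mathrm{low}})_\ell\, p_\ell$ lies in $\mathrm{span}\{f_j\}$ and is the natural candidate approximant.

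Since every retained high index obeys $\sigma_j \geq \sigma_{k_0-1} > \epsilon$, I would bound $\sum_{j<k_0} a_j^2 \leq \sigma_{k_0-1}^{-1}\sum_{j<k_0}\sigma_j a_j^2 \leq \sigma_{k_0-1}^{-1}\gamma^2$. The isometry then yields $\|g - \hat g\|_2 = \|c_{\mathrm{high}}\|_2 \leq \gamma/\sqrt{\sigma_{k_0-1}}$, and since $\hat g \in \mathrm{span}\{f_j\}$ the distance from $g$ to the span is no larger; the final inequality $\leq \gamma/\sqrt{\epsilon}$ follows from $\sigma_{k_0-1} > \epsilon$, which holds by minimality in the definition of $k_0$.

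The one genuinely delicate point is the last passage from the Euclidean coefficient norm to the $L_2$ function norm: it is clean exactly when $\{p_j\}$ is $L_2(\Omega)$-orthonormal, and for a general basis one instead picks up the extreme eigenvalues of the Gram matrix $G_{\ell m} = \langle p_\ell, p_m\rangle$, degrading the bound to $\gamma\sqrt{\lambda_{\max}(G)}/\sqrt{\sigma_{k_0-1}}$; so I would either assume orthonormality or fold $G$ into the statement. A secondary, but essential, bookkeeping point concerns the index range of $\{f_j\}$: for $\hat g$ to be admissible and for the bound to hold, $\mathrm{span}\{f_j\}$ must be the \emph{full} image of the low-eigenvalue eigenspace $\mathrm{span}\{v_{k_0},\dots,v_k\}$. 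If any low eigenvector were omitted, its tiny $\sigma_j$ would fail to control the corresponding $a_j$, and the estimate would break; this suggests the displayed count $k-k_0-1$ should in fact enumerate all indices $j \geq k_0$.
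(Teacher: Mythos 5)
Your proof is correct and follows essentially the same route as the paper's: expand the coefficient vector of $g$ in the orthonormal eigenbasis of $Z^\dagger Z$, use $\|Xg\|_2^2 = \sum_j \sigma_j a_j^2 \geq \sigma_{k_0-1}\sum_{j<k_0} a_j^2$ to control the component of $g$ outside the retained eigenspace, and identify that component's norm with the distance to $\mathrm{span}\{f_1,\dots\}$. The two caveats you flag are genuine: the paper's earlier remark that $L_2$-orthonormality of $\{p_j\}$ is ``not needed for our construction'' is inconsistent with the step that equates the Euclidean coefficient norm with the $L_2$ function norm (without it one picks up the Gram-matrix factor you describe), and the index count $k-k_0-1$ should enumerate all eigenvectors with $\sigma_j \leq \epsilon$, i.e.\ $k-k_0+1$ of them --- the paper's own proof silently assumes both points are resolved as you propose.
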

    \begin{proof}
        We will use the fact that $\{v_1, ..., v_k\}$ is an orthonormal basis.
        We write
        \begin{align*}
            \gamma^2
            &\geq  \|Xg\|_2 
            \geq \|Xg - X\text{span}\{f_1, ..., f_{k-k_0-1}\}\|_2^2 
            = \sum_{j=1}^{k_0-1} c_j^2 \cdot \|v_{j}\|_2^2 \sigma_j 
            \geq \sigma_{k_0-1} \cdot \sum_{j=1}^{k_0-1} c_j^2 \|v_j\|_2^2 \\
            &\geq \epsilon \|g - \text{span}\{f_1, ..., f_{k-k_0-1}\}\|_2^2.
        \end{align*}
        The first inequality is by the hypothesis of the theorem.  The second inequality
        is using orthonormality of $\{v_1, ..., v_k\}$.  Specifically, $\{v_{k_0}, ..., v_k\}$
        is orthogonal to $\{v_1, ..., v_{k_0-1}\}$.  The third inequality is by the fact that
        $\sigma_i \geq \sigma_j$ whenever $i \leq j$.  Finally, the last inequality is
        by definition of $k_0$.
        This completes the proof.
    \end{proof}

    
    \begin{theorem}[Approximating the approximation target class via approximate kernels]
        \label{thm:completeness}
        Let $\gamma > 0$ be arbitrary.  For any $k \in \mathbb{N}$ and $B > 0$, for any $g \in F_{\gamma,k,B}$, 
        we have
        \begin{equation*}
            \|g - \text{span}_{L_1,B}\{f_1, ..., f_{k-k_0-1}\}\|_2
            \leq \gamma + \frac{\lambda_X\gamma}{\sqrt{\epsilon}}.
            \label{expr:completeness-bound}
        \end{equation*}
    \end{theorem}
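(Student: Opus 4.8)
The plan is to prove Theorem~\ref{thm:completeness} by a single application of the triangle inequality, with the two legs of the triangle controlled by the two ingredients already in hand: the defining property of the approximation target class $F_{\gamma,k,B}$ (Definition~\ref{def:approximation-target-class}) and the kernel-approximation bound of Lemma~\ref{lemma:small-Xg-implies-close-to-basis}. The operator $X$ serves as the bridge between them. Throughout I interpret each $v_j$ as the function $\tilde v_j := \sum_{\ell} v_{j,\ell}\, p_\ell$, so that $\text{span}\{v_1,\ldots,v_k\} = \hat H_k$; since $\{p_\ell\}$ is taken orthonormal in $L_2(\Omega)$ and $\{v_j\}$ is orthonormal in $\R^k$, the family $\{\tilde v_j\}$ is orthonormal in $L_2(\Omega)$, a fact I will use to track coefficients.

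First I would fix $g \in F_{\gamma,k,B}$ and extract from Definition~\ref{def:approximation-target-class} a function $h \in \text{span}_{L_1,B}\{v_1,\ldots,v_k\} \subseteq \hat H_k$ with $\|g - h\|_2 \le \gamma$. Next I would bound $\|X h\|_2$. Because $g \in \ker(X)$ we have $X g = 0$, so by linearity of $X$ together with its Lipschitz/operator bound $\lambda_X$ in the $L_2$ norm,
\begin{equation*}
    \|X h\|_2 = \|X(h - g)\|_2 \le \lambda_X\,\|h - g\|_2 \le \lambda_X \gamma.
\end{equation*}
Here $h$ lies in the finite-dimensional polynomial space $\hat H_k$, on which the differential operator $X$ is bounded, so $\lambda_X$ is finite and this step is legitimate.

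I would then apply Lemma~\ref{lemma:small-Xg-implies-close-to-basis} to $h \in \hat H_k$ with the value $\lambda_X \gamma$ playing the role of $\gamma$, obtaining an approximant $\hat h \in \text{span}\{f_1,\ldots,f_{k-k_0-1}\}$ with $\|h - \hat h\|_2 \le \lambda_X \gamma / \sqrt{\epsilon}$. The closing triangle-inequality step then gives
\begin{equation*}
    \|g - \hat h\|_2 \le \|g - h\|_2 + \|h - \hat h\|_2 \le \gamma + \frac{\lambda_X \gamma}{\sqrt{\epsilon}},
\end{equation*}
which is exactly the claimed bound, provided $\hat h$ can be taken inside the $L_1$-$B$-bounded span.

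The main obstacle I anticipate is precisely this last bookkeeping point: Lemma~\ref{lemma:small-Xg-implies-close-to-basis} as stated only controls distance to the \emph{unrestricted} span $\text{span}\{f_1,\ldots,f_{k-k_0-1}\}$, whereas the theorem requires membership in $\text{span}_{L_1,B}\{f_1,\ldots,f_{k-k_0-1}\}$. To close this gap I would observe that the optimal approximant $\hat h$ is the $L_2$-projection of $h$ onto the $f$-span, and—by orthonormality of $\{\tilde v_j\}$—this projection is exactly the truncation of $h$'s expansion to the small-singular-value indices $\ge k_0$. Since those coefficients form a sub-collection of $h$'s coefficients, their $L_1$ mass is no larger than that of $h$, hence at most $B$; therefore $\hat h \in \text{span}_{L_1,B}\{f_1,\ldots,f_{k-k_0-1}\}$. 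A secondary, minor point worth confirming is the interpretation of $\lambda_X$ as the $L_2\!\to\!L_2$ operator norm of $X$ restricted to $\hat H_k$, so that the Lipschitz estimate above is valid.
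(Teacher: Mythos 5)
Your proof follows the same route as the paper's: pick an approximant $\hat g$ (your $h$) from the definition of $F_{\gamma,k,B}$, use $Xg=0$ together with the $L_2\to L_2$ Lipschitz constant $\lambda_X$ to get $\|X\hat g\|_2 \le \lambda_X\gamma$, invoke Lemma~\ref{lemma:small-Xg-implies-close-to-basis}, and close with the triangle inequality. Your final bookkeeping point about upgrading the unrestricted span to $\text{span}_{L_1,B}\{f_1,\ldots,f_{k-k_0-1}\}$ is actually handled more carefully in your write-up than in the paper, which simply asserts the strengthening on the grounds that $\hat g$'s coefficients are $L_1$-bounded by $B$.
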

    \begin{proof}
        By definition of $F_{\gamma,k,B}$, there exists some $\hat{g} \in \text{span}_{L_1,B}\{v_1, ..., v_k\}$
        such that
        $ 
            \|g - \hat{g}\|_2 
            \leq \gamma.
        $ 
        %
        Again from the definition of $F_{\gamma,k,B}$, we have that $Xg = 0$.  This implies 
        that
        $ 
            \|X\hat{g}\|_2
            = \|X\hat{g} - Xg\|_2
            \leq \lambda_X \|\hat{g} - g\|_2
            \leq \lambda_X \cdot \gamma.
        $ 
        Here, we have used the Lipschitz property of $X$, with $L_2\to L_2$ Lipschitz constant $\lambda_X$.
        
        From Lemma~\ref{lemma:small-Xg-implies-close-to-basis}, $\hat{g}$ is close to $\text{span}\{f_1, ..., f_{k-k_0-1}\}$:
        $ 
            \|\hat{g} - \text{span}\{f_1, ..., f_{k-k_0-1}\} \|_2 \leq \frac{\lambda_X \gamma}{\sqrt{\epsilon}}.
        $ 
        Since $\hat{g}$ has coefficients for $\{v_1, ..., v_k\}$ that are bounded by $B$ in the $L_1$ norm,
        this can be strengthened to
        $ 
            \|\hat{g} - \text{span}_{L_1,B}\{f_1, ..., f_{k-k_0-1}\} \|_2 \leq \frac{\lambda_X \gamma}{\sqrt{\epsilon}}.
        $ 

        Putting everything together using the triangle inequality,
        we have that
        \begin{align*}
            \|g - \text{span}_{L_1,B}\{f_1, ..., f_{k-k_0-1}\}\|_2
            \leq \|g - \hat{g}\|_2 + \|\hat{g} - \text{span}_{L_1,B}\{f_1, ..., f_{k-k_0-1}\} \|_2  
            \leq \gamma + \frac{\lambda_X \gamma}{\sqrt{\epsilon}}.
        \end{align*}
    \end{proof}
    Plugging in $\gamma = \epsilon$, we get that (\ref{expr:completeness-bound}) converges to $0$ as $\epsilon \to 0$.

\paragraph{Establishing soundness}
\label{sec:soundness}
    To establish soundness, we restrict the span of the $\{f_j\}$ to have $B$-bounded coefficients (in the $L_1$ norm).  This will allow
    us to upper bound $\|Xg\|_{2}$ for arbitrary $g$ in this set.  This upper bound implies an upper bound on the $L_1$ norm that grows with the volume of the domain $\Omega$.  This, in turn, implies that no non-negligible subset can have $Xg$ too large in $L_{\infty}$ norm.

    \begin{theorem}[Soundness]
        \label{thm:soundness}
        Let $B > 0$.  We have that for any $g \in \text{span}_{L_1,B}\{f_1, ..., f_{k-k_0-1}\}$,
        $
            \|Xg\|_2 \leq \sqrt{B\epsilon}.
        $ 
        This implies
        $ 
            \|Xg\|_1 
            \leq \text{Vol}(\Omega) \cdot \|Xg\|_2 \leq \text{Vol}(\Omega)\sqrt{B\epsilon}.
        $ 
    \end{theorem}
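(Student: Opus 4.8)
The plan is to reduce everything to the spectral data of $Z^\dagger Z$ that was used to define the $f_j$. The key starting observation is that $X$ is a linear operator, so for each constructed basis function $f_j = \sum_{\ell=1}^k v_{k_0+j-1,\ell}\, p_\ell$ we have $Xf_j = \sum_{\ell} v_{k_0+j-1,\ell}\, Xp_\ell = Z v_{k_0+j-1}$, where $Z$ is viewed as the operator $\R^k \to C^{\infty}(\Omega)$ sending a coefficient vector to the corresponding combination of the columns $Xp_\ell$. First I would compute $\|Xf_j\|_2^2 = \langle Z v_{k_0+j-1}, Z v_{k_0+j-1}\rangle = v_{k_0+j-1}^\dagger (Z^\dagger Z)\, v_{k_0+j-1}$, recognizing $Z^\dagger Z$ as precisely the Gram matrix with entries $\langle Xp_i, Xp_{i'}\rangle$. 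Since $v_{k_0+j-1}$ is a unit eigenvector of $Z^\dagger Z$ with eigenvalue $\sigma_{k_0+j-1}$, this equals $\sigma_{k_0+j-1}$. By the definition of $k_0$, every index appearing here satisfies $k_0 + j - 1 \geq k_0$, so that $\|Xf_j\|_2^2 = \sigma_{k_0+j-1} \leq \epsilon$.

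Next I would treat a general $g = \sum_j c_j f_j \in \text{span}_{L_1,B}\{f_1,\dots,f_{k-k_0-1}\}$, so that $\sum_j |c_j| \leq B$. Because the vectors $v_{k_0}, v_{k_0+1}, \dots$ are mutually orthonormal eigenvectors of $Z^\dagger Z$, the images $Xf_j = Zv_{k_0+j-1}$ are mutually orthogonal in $L_2$: indeed $\langle Xf_i, Xf_j\rangle = v_{k_0+i-1}^\dagger (Z^\dagger Z)\, v_{k_0+j-1} = \sigma_{k_0+j-1}\,\delta_{ij}$. This orthogonality gives $\|Xg\|_2^2 = \sum_j c_j^2\,\sigma_{k_0+j-1} \leq \epsilon \sum_j c_j^2$, which I would then convert into the stated bound using the $L_1$ coefficient budget. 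The clean triangle-inequality route also works and may be what I would write out in full, since $\|Xg\|_2 \leq \sum_j |c_j|\,\|Xf_j\|_2 \leq \sqrt{\epsilon}\sum_j |c_j| \leq B\sqrt{\epsilon}$; reconciling the exact constant in $\sqrt{B\epsilon}$ against this $B\sqrt{\epsilon}$ is the bookkeeping point that needs care, and I would state explicitly whether the $L_1$ budget is being used to control $\sum_j c_j^2$ or $\sum_j |c_j|$.

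The passage from the $L_2$ estimate to the $L_1$ estimate is then a one-line application of H\"older's inequality on the compact domain: $\|Xg\|_1 = \int_\Omega |Xg| \cdot 1 \leq \|Xg\|_2\,\|1\|_2 = \sqrt{\text{Vol}(\Omega)}\,\|Xg\|_2$, bounding $\|Xg\|_1$ by a volume factor times the $L_2$ bound already obtained, which I would combine with the first estimate to finish.

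The main obstacle I anticipate is not any single inequality but the careful identification $\|Xf_j\|_2^2 = \sigma_{k_0+j-1}$, which hinges on $Z^\dagger Z$ genuinely acting as the adjoint Gram operator with respect to the $L_2$ inner product on $C^{\infty}(\Omega)$, so that eigenvalues of the finite matrix $Z^\dagger Z$ coincide with squared $L_2$ norms of the corresponding function-space images. Getting the index range ($k_0 + j - 1 \geq k_0$, hence $\sigma \leq \epsilon$) and the $L_1$-to-$L_2$ coefficient conversion exactly right is where I expect the effort to go; once those are pinned down, the orthogonality and H\"older steps are routine.
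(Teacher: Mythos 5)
Your proposal is correct and is essentially the argument the paper intends: the paper's own proof is only a two-sentence assertion that the bound ``follows immediately from the choice of $k_0$ and the $L_1$ bound $B$'' and from ``elementary properties of norms,'' and the spectral computation you spell out ($Xf_j = Zv_{k_0+j-1}$, $\|Xf_j\|_2^2 = v_{k_0+j-1}^\dagger (Z^\dagger Z) v_{k_0+j-1} = \sigma_{k_0+j-1} \leq \epsilon$, then orthogonality of the images or the triangle inequality over the coefficient budget) is exactly the content being waved at.

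The bookkeeping point you flag is real and you should resolve it against the statement rather than leave it open: with only the $L_1$ budget $\sum_j |c_j| \leq B$, the orthogonality route gives $\|Xg\|_2^2 = \sum_j c_j^2 \sigma_{k_0+j-1} \leq \epsilon \sum_j c_j^2 \leq \epsilon \bigl(\sum_j |c_j|\bigr)^2 \leq \epsilon B^2$, i.e.\ $\|Xg\|_2 \leq B\sqrt{\epsilon}$, matching your triangle-inequality route. The stated $\sqrt{B\epsilon}$ would require $\sum_j c_j^2 \leq B$, which the $L_1$ bound does not supply (take a single coefficient equal to $B > 1$). So the constant in the theorem as printed does not follow from its hypotheses; your derivation is the correct one and the bound should read $B\sqrt{\epsilon}$ (the two agree only when $B \leq 1$, in which case $B\sqrt{\epsilon} \le \sqrt{B\epsilon}$). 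Likewise your H\"older step correctly yields $\|Xg\|_1 \leq \sqrt{\text{Vol}(\Omega)}\,\|Xg\|_2$, which implies the paper's $\text{Vol}(\Omega)$ factor only when $\text{Vol}(\Omega) \geq 1$. Neither issue is a gap in your reasoning; both are imprecisions in the theorem statement that your proof exposes.
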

    \begin{proof}
        The first inequality 
        follows immediately from the choice of $k_0$ and the
        $L_1$ bound $B$.
        The second inequality follows from elementary properties of norms.
    \end{proof}
    We note that soundness has no dependence on $k$, because of the choice of $k_0$.

\subsubsection{Enforcing symmetries via regularization}

We now turn to the topic of enforcing continuous symmetries in a smooth machine learning model during training using regularization based on discovered infinitesimal generators $\{X_k\}_{k=1}^s$.

Suppose that we seek to learn a function $f$ mapping data instances $x_i$ to targets $y_i$, where $x_i \in \mathbb{R}^d$ and $y_i \in \mathbb{R}^m$. Suppose also that the function $f$ is estimated by means of the minimization of $L(f(\mathbf{x}),\mathbf{y})$ for a smooth loss function $L$.

Now suppose that we also  desire that the function $f$ learns to exhibit continuous symmetry with respect to the independent infinitesimal generators $\{X_k\}_{k=1}^s$. That is, for each component $f_j$ of $f$,
$ 
    X_k(f_j)=0,
$ 
for $1 \leq k \leq s$ and $1 \leq j \leq m$. Our method of symmetry enforcement in training is to minimize the following loss function:
\begin{equation}{\label{EnforcementEqn}}
    (1-\lambda(t)) L(f(\mathbf{x}),\mathbf{y}) + \lambda(t) \Tilde{L}(\mathbf{X}(f)(\mathbf{x}),\mathcal{O}),
\end{equation}
where $\mathbf{X}(f)(\mathbf{x}) = (X_k(f_j))(x_i)$, $\mathcal{O}$ is an array with zero components of the same shape as $\mathbf{X}(f)(\mathbf{x})$, $\Tilde{L}$ is a smooth loss function, and $\lambda(t) \in [0,1]$ is a (possibly time-dependent) symmetry regularization parameter.

The loss function given in Eq. (\ref{EnforcementEqn}) resembles that of a PINN, where our $\Tilde{L}$ term corresponds to differential constraints imposed by the governing equations for a given physical system \cite{Raissi1}. 
Additionally, our symmetry enforcement term is a differential constraint, so that our method of symmetry enforcement appears to be a special case of a PINN in which the differential operators are linear, first-order, and homogeneous. Despite the mathematical similarity of enforcing continuous symmetries using vector fields and the enforcement of differential equations governing a physical system, we note that PINNs do not erase the need for symmetry enforcement using vector fields. This is due simply to the fact that the desired symmetry characteristics of a machine learning model are not always readily expressible in terms of a physical system. We note, however, that a machine learning task subject to physical constraints may be amenable to vector field regularization--in particular, a \textit{Hamiltonian vector field} can often be readily given for a Hamiltonian defined on a symplectic manifold.


\subsection{Limitations}
\label{sec:limitations}

Our methods are state of the art for continuous symmetry detection and enforcement, both for computational reasons and due to the fact that our methods can readily handle continuous symmetries that are not affine transformations without the need to discretize continuous transformations. However, there are limitations to our methods. First, as we have mentioned previously, is the issue which arises when vector fields $X$ and $fX$ are both present in the search space. We have suggested that a symbolic approach may resolve this, though non-symbolic approaches such as ours must grapple with this issue. Our approach of restricting the model class to low-degree polynomials partially resolves this issue. Another promising resolution is to restrict the search space of vector fields to particular types of symmetries, such as isometries, as described in Section \ref{sec:IsometryDiscovery}. Further work is required to fully resolve this.

Another limitation is that our method requires an assumption about the coefficient functions of the vector fields. Herein, we primarily use polynomial functions, and although polynomials are universal approximators \cite{PolyApprox}, the number of polynomial terms for a given coefficient function can increase rapidly, particularly when the number of dimensions is high. We believe this limitation is best resolved, as with the previous limitation, by restricting the search space to particular types of symmetries. 

Though our method is comparatively efficient, the reliance on the elbow curve to determine both the number of discovered symmetries and, when level set estimation is used, the number of coefficients in the level set, is a limitation. Unlike the previous limitations, this limitation is shared by methods for other common problems, appearing in analogous form as the dimension hyperparameter in manifold learning methods, or as the number of clusters in $k$-means clustering. It is possible that our method could be improved by another means of selecting this number. 

We also mention the limitations inherited via the choice of optimizing Equation (\ref{estVF}) 
Cost function selection, optimization algorithm selection, hyperparameter selection, together with the potential for poor parameter initialization all constitute limitations to our method. However, the optimization task is much more straight-forward than training many other symmetry detection models,  as most other methods require fine-tuning neural networks.


\section{Experiments}
\label{sec:experiments}

We offer three experiments to provide for an effective demonstration of our methods. 
First, we learn the infinitesimal symmetries of a machine learning model given as a neural network. While this has been previously suggested as possible in principle (see \cite{Shaw1}), it was not explicitly accomplished. The second experiment enforces non-affine symmetry in model training, obtaining high test accuracy without the need of data augmentation. 
Our final experiment discovers infinitesimal isometries of a classification function, where the distance measure in the original feature space is induced by a quadratic feature map.

\subsection{Symmetry Discovery and Enforcement for Neural Networks}
\label{sec:nnExp}

In our first experiment, we demonstrate how a feedforward neural network can learn a rotationally symmetric function from 2D input data,how its estimated derivatives can be used to detect the underlying symmetry, and how that learned symmetry can be used to further enforce model symmetry for better generalization. For the training set, we generate $N=1000$ points $(x,y)$ uniformly from a unit disk by first sampling 
$ 
    r = \sqrt{U}, \quad \theta \sim \mathrm{Uniform}(0,\pi), \quad U \sim \mathrm{Uniform}(0,4),
$ 
and then converting to Cartesian coordinates by means of $x = r\cos(\theta)$ and $y = r\sin(\theta)$. The test set is sampled similarly, but with $N=3000$ and $\theta \sim \mathrm{Uniform}(\pi,2\pi)$.

Our regression target is $f(x,y)=x^2+y^2$, which is invariant under planar rotations about the origin. A ground-truth infinitesimal generator for this rotational symmetry is 
$ 
    X = -y\partial_x + x\partial_y.
$ 
We define a multi-layer perceptron $F:\mathbb{R}^2\to\mathbb{R}$ with two hidden layers of size 64 and train it for 5000 epochs using the Adam optimizer at a learning rate of $0.01$. The final mean-square error (MSE) on the training set is on the order of $10^{-5}$, while the test MSE is approximately $0.8181$. 

To uncover the underlying rotational symmetry from the network’s derivatives, we apply Equation (\ref{estVF}) under the assumption that the coefficients of the estimated vector field $\hat{X}$ are affine. The gradient of $F$ is approximated using 
autograd. Using the L1 Loss, the (Riemannian) Adagrad optimizer with learning rate $0.1$, and training for $1000$ epochs, we recover the following estimate of the symmetry of $F$:
$ 
    \hat{X} = (0.0012x + 0.7144y - 0.0135) \partial_x + (-0.6996x - 0.0059y + 0.0037)\partial_y,
$ 

We now use the learned symmetry to retrain the model with symmetry regularization. We train a new model as before, only with $4500$ epochs instead of $5000$. Then, for the remaining $500$ epochs, we regularize with symmetry using equation \ref{EnforcementEqn} with $\lambda=0.01$, $\mathcal{L}_2$ being the MSE loss, and a new learning rate of $0.001$. This results in a test MSE (for labels) of $0.1148$, a fraction of the test MSE of $0.8181$ obtained without symmetry enforcement.

\subsection{Non-affine Symmetry Enforcement}
\label{sec:naExp}

Our second experiment compares ordinary polynomial regression with polynomial regression using our symmetry enforcement method. First, we generate 2000 datapoints $(x_i,y_i)$ in the plane in a ``thin strip'' configuration: each $x_i$ is generated from $\text{Uniform}(-4,4)$, and $y_i=0+\epsilon_i$, where $\epsilon_i \in (-0.1,0.1)$ (distributed uniformly) is a random noise component. For each point, we assign regression targets according to the function
\begin{equation}{\label{exp2f}}
    f = 2x^4-2x^2y^2+y^4.
\end{equation}
A ground truth symmetry of this function is given as
\begin{equation}{\label{exp2X}}
    X = (x^2y -y^3) \partial_x + (2x^3-xy^2) \partial_y.
\end{equation}
To apply ``ordinary'' polynomial regression, we seek to minimize the MSE between the output of the function $f$ given in Equation (\ref{exp2f}) and the output of an arbitrary degree four polynomial, iteratively updating the coefficients. We use gradient descent, a learning rate of $10^{-4}$, and $100,000$ epochs.
This results in a training MSE of $0.0159$ and a learned function given approximately as
\begin{equation}
    \hat{f} = -0.0028 + 0.0023 x  -0.7872 y -0.0022 x^2 + 2.2454 xy + 0.4294 y^2  -0.0001 x^3 + 0.0546 x^2y
\end{equation}
\begin{equation*}
    -0.5132 xy^2 + 0.0426 y^3 +1.9999 x^4 -0.1950 x^3y -0.9596 x^2y^2 + 0.8888 xy^3 + 1.7187 y^4.
\end{equation*}
With $\langle \hat{f}, f \rangle_{\text{train}} = \int_{-0.1}^{0.1}\int_{-4}^{4} \hat{f} f dx dy$, we find that
$ 
    \cos(\theta_{\text{train}}) = \dfrac{\langle \hat{f}, f \rangle_{\text{train}}}{ ||\hat{f}||_{\text{train}} ||f||_{\text{train}}} \approx 0.9999.
$ 
However, to examine the generalizability of the learned function on the region $[-3,3] \times [-3,3]$, we define $\langle \hat{f}, f \rangle_{\text{test}} = \int_{-3}^{3}\int_{-3}^{3} \hat{f} f dx dy$, resulting in 
$ 
    \cos(\theta_{\text{test}}) = \dfrac{\langle \hat{f}, f \rangle_{\text{test}}}{ ||\hat{f}||_{\text{test}} ||f||_{\text{test}}} \approx 0.8847.
$ 
Additionally, a visualization of the predictions for $\hat{f}$ are given in Figure \ref{fig:nonaffine}. To estimate $f$ using symmetry enforcement, we apply Equation (\ref{EnforcementEqn}) with $\lambda(t)=0.5$, with both loss functions being the MSE. Training, as before, for $100,000$ epochs, we obtain a (total) training loss function minimum of $1.4 \cdot 10^{-6}$, with
$ 
    \hat{f}_s \approx 0.0001 + 0.0004 y^2 + 2.0000 x^4 - 2.0000 x^2 y^2 + 0.9594 y^4.
$ 
We find that
$ 
    \cos(\theta_{\text{train}}) \approx 1.0000, 
    \cos(\theta_{\text{test}}) \approx 0.9998,
$ 
and, as Figure \ref{fig:nonaffine} reveals, model predictions whose symmetries appear to be more in line with that of the ground truth function.

\begin{figure}
    \centering
    \includegraphics[width=\linewidth]{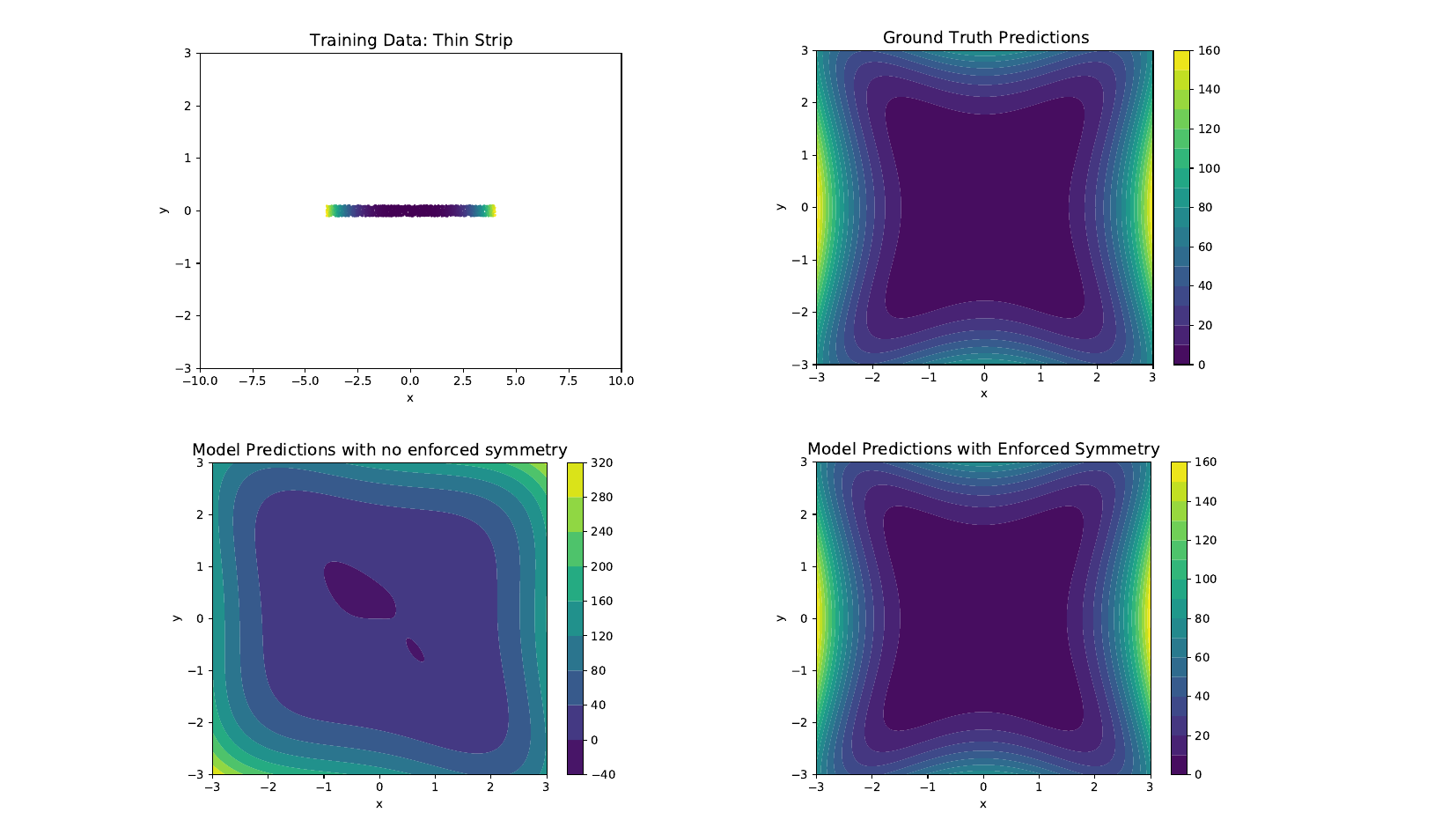}
    \caption{Symmetry enforcement visualizations for the second experiment. Top left: an illustration of the training data and regression labels. Top right: a contour plot of the ground truth regression function. Bottom left: a contour plot of the model trained without symmetry enforcement. Bottom right: a contour plot of a model trained by enforcing symmetry.}
    \label{fig:nonaffine}
\end{figure}

\subsection{Isometry Discovery} 
\label{sec:isometryExp} 
We consider a 3-dimensional classification problem in which points $(x,y,z)$ are assigned to class 1 if $f(x,y,z) > 0$ and to class 0 otherwise, where
$ 
f(x,y,z) = x^2 + \tfrac{y^2}{2} - yz + \tfrac{z^2}{2} - \tfrac{1}{2}.
$ 
 To learn the class separation, we map the original data to a 6-dimensional Euclidean feature space only using the degree-2 terms of a polynomial feature map:
\begin{equation*}
\Phi(x,y,z) \;=\; \bigl[\,x^2,\;\sqrt{2}\,x\,y,\;\sqrt{2}\,x\,z,\;y^2,\;\sqrt{2}\,y\,z,\;z^2\bigr].
\end{equation*}
This mapping results in the following pullback metric on the original 3-d space:
\begin{equation*}
g =
\begin{bmatrix}
4x^2 + 2y^2 + 2z^2 & 2yx & 2zx\\ 
2yx & 2x^2 + 4y^2 + 2z^2 & 2yz\\ 
2zx & 2yz & 2x^2 + 2y^2 + 4z^2
\end{bmatrix}.
\end{equation*}
Next, a logistic regression classifier is trained in the 6-d feature space, producing a decision boundary that, when translated back to the original coordinates, approximately corresponds to the quadratic form $f(x,y,z)=0$. We wish to learn the isometries of the model probabilities, that is, the Killing vectors which annihilate 
$ 
p(x,y,z) \;=\; \frac{1}{\,1 + e^{-\,f(x,y,z)}\,}.
$ 
The metric $g$ admits three linearly independent Killing vectors, the flows of which are 3-d rotations:
\begin{equation*}
X_1 = -\,z\,\partial_y + y\,\partial_z,
\quad
X_2 = -\,z\,\partial_x + x\,\partial_z,
\quad
X_3 = -\,y\,\partial_x + x\,\partial_y.
\end{equation*}
We now construct an arbitrary linear combination
$ 
\hat{X} = aX_1 + bX_2 + cX_3
$ 
and optimize $(a,b,c)$ via Equation~(\eqref{estVF}). We note that a ground truth solution $X$ has coefficients $(0,-1,1)$. Using the L1 Loss function and the (Riemannian) Adagrad optimizer with a learning rate of $0.01$, we obtain the coefficients $(0.0055, 0.7076, -0.7066)$, and the similarity between $X$ and $\hat{X}$ is estimated to be $0.99992$. 

Thus, we have estimated the isometries of a model probability function. Because $\hat{X}$ approximately annihilates $p(x,y,z)$, the probability value of a point $\mathbf{x}$ is approximately invariant under the flow of $\hat{X}$, which is a planar rotation. Because the flow of $\hat{X}$ is an isometry, the distance (induced by the metric $g$) between two points remains unchanged when the two points are simultaneously rotated by an angle $\theta$ according to the rotation defined by the flow of $\hat{X}$.

\section{Conclusion}
While most current state of the art experiments focus, in the context of continuous symmetry detection, on affine symmetries, we have outlined a method to estimate symmetries including and beyond affine transformations using vector fields. Our method takes as input an estimated machine learning function such as a probability distribution, continuous targets for regression, or functions which characterize an embedded submanifold (obtained via level set estimation). Vector fields are constructed as arbitrary linear combinations of a chosen basis of vector fields, and the parameters that are the coefficients of the linear combination are optimized, often using constrained optimization. Herein, we have typically expressed the coefficients of our vector fields in terms of polynomial functions, excepting isometries, in which a linear combination of Killing vectors is formed. Future work includes detecting other types of symmetries, including angle-preserving transformations whose infinitesimal generators are known as conformal Killing vectors \cite{Shaw2}.

When searching for affine symmetries in low dimensions, our method is appealing when compared with many current methods. This is due primarily to the relative ease of implementation. Additionally, when compared with the state of the art, our method offers a computational advantage while at least competing and often outperforming in terms of accuracy, especially for more complex symmetries than affine ones \cite{Shaw1}.

Infinitesimal generators are a step removed from the group action. 
We have proposed building invariant architectures based on the identification of features which are invariant with respect to the discovered vector fields, resulting in an invariant model, since the information fed to the model is approximately invariant with respect to the group action. We also proposed a means of symmetry enforcement in the training of a model though symmetry regularization. Both methods may help to improve model generalization.




\bibliographystyle{siamplain}
\bibliography{main}

\end{document}